\documentclass{article}

\usepackage{fullpage}
\usepackage[utf8]{inputenc}
\usepackage[T1]{fontenc}
\usepackage{url}
\usepackage{booktabs}
\usepackage{amsfonts}
\usepackage{amsmath}
\usepackage{amssymb}
\usepackage{amsthm}
\usepackage{nicefrac}
\usepackage{microtype}
\usepackage{xcolor}
\usepackage{natbib}

\usepackage[ruled]{algorithm2e}
\usepackage{algorithmic}

\usepackage{todonotes}

\usepackage[hidelinks]{hyperref}
\usepackage[capitalise,noabbrev]{cleveref}

\crefname{equation}{}{}
\newtheorem{theorem}{Theorem}
\newtheorem{lemma}{Lemma}
\newtheorem{proposition}{Proposition}
\newtheorem{corollary}{Corollary}

\newtheorem{remk}[theorem]{Remark}
\newtheorem{exmp}[theorem]{Example}

\def\FullBox{\hbox{\vrule width 8pt height 8pt depth 0pt}}

\def\qed{\ifmmode\qquad\FullBox\else{\unskip\nobreak\hfil
\penalty50\hskip1em\null\nobreak\hfil\FullBox
\parfillskip=0pt\finalhyphendemerits=0\endgraf}\fi}

\def\qedsketch{\ifmmode\Box\else{\unskip\nobreak\hfil
\penalty50\hskip1em\null\nobreak\hfil$\Box$
\parfillskip=0pt\finalhyphendemerits=0\endgraf}\fi}

\newcommand{\R}{{\mathbb R}} 

\DeclareMathOperator*{\argmin}{arg\,min}

\newcommand{\lr}[1]{\left (#1\right)} 
\newcommand{\lrs}[1]{\left [#1 \right]} 
\newcommand{\lrc}[1]{\left \{#1\right\}} 
\newcommand{\lrn}[1]{\left \|#1\right\|} 

\let\P\undefined
\NewDocumentCommand{\P}{o}{\mathbb P{\IfValueT{#1}{\lr{#1}}}}

\NewDocumentCommand{\E}{o}{\mathbb E\IfValueT{#1}{\lrs{#1}}}

\DeclareMathOperator{\V}{Var}
\NewDocumentCommand{\Var}{o}{\V\IfValueT{#1}{\lrs{#1}}}

\NewDocumentCommand{\1}{o}{\mathds 1{\IfValueT{#1}{\lr{#1}}}}

\NewDocumentCommand{\CO}{o}{\mathcal O{\IfValueT{#1}{\lr{#1}}}}

\newcommand{\dd}{\mathop{}\!\mathrm{d}}

\title{Time-Uniform Self-Normalized Concentration for Discounted Least Squares: Limits and Corrections}
\author{%
  Yi-Shan Wu\thanks{Research Center for Information Technology Innovation, Academia Sinica, Taipei, Taiwan. \texttt{yishan.eason.wu@gmail.com}}%
}
\date{}

\begin{document}
\maketitle
\begin{abstract}
Self-normalized concentration inequalities are standard tools in bandit and reinforcement-learning analyses. A widely used weighted extension claims an analogous time-uniform guarantee for discounted least-squares estimators in non-stationary problems. A simple scalar Gaussian counterexample with a fixed parameter shows that the claimed bounded radius is crossed with probability one. For fixed discount and regularization parameters, we further show that, when \(\delta\leq1/2\) and \(T/\delta\) is sufficiently large, any deterministic anytime boundary valid uniformly over the stated conditionally sub-Gaussian model class must be at least of order \(R\sqrt{\log(T/\delta)}\) at some time by horizon \(T\); for nondecreasing boundaries, this order is required at time \(T\). We identify the proof error: different terminal times use different Gaussian mixing distributions, so the fixed-time mixtures do not form one supermartingale, and the stopping-time argument does not repair this failure. Finally, we show that the weighted inequality remains valid at each fixed deterministic time, give valid finite- and infinite-horizon corrections, and discuss consequences for downstream analyses.
\end{abstract}

\section{Introduction}\label{Sec:Intro}
Self-normalized concentration inequalities are a basic tool in sequential learning. In linear bandits and online regression, covariates may be chosen using past observations, so the accumulated noise does not have one fixed variance scale. Instead, its scale depends on the directions and amount of information collected so far. Self-normalized inequalities control this noise after normalizing it by the corresponding empirical covariance matrix. They are the usual route to confidence sets for an unknown fixed linear parameter \(\theta^*\); the classical linear-bandit confidence construction of \citet{APS11-LinUCB} is a standard example. A time-uniform guarantee is often preferred because subsequent decisions and stopping times may depend on the observed data; it provides one high-probability event that remains valid throughout the trajectory.

In a non-stationary problem, the parameter to be estimated may change over time. Then old observations can become less informative about the current parameter \(\theta_t\), and it is natural to give recent data more weight than older data. Discounted or weighted least-squares estimators implement exactly this idea. They are used in non-stationary linear and generalized-linear bandits, where discounting provides a simple way to track a changing parameter; see, for example, \citet{RVC19weightedLB,RFCG21-GLB,WZZ23revisiting}. To analyze such methods, one needs a confidence bound for the corresponding weighted noise process that holds along the whole trajectory. Under terminal-time discounting, however, the weights assigned to past observations change as time moves forward. Thus, a bound valid at each fixed time does not automatically give one event that is valid simultaneously over all times.

This note examines a widely used claimed time-uniform weighted self-normalized inequality in discounted bandit and reinforcement-learning analyses. We show by a scalar Gaussian example that its bounded radius is false and derive a lower bound for every horizon \(T\). For fixed discount and regularization parameters, this lower bound requires the running envelope of any valid anytime boundary to be at least of order \(R\sqrt{\log(T/\delta)}\) whenever \(\delta\leq1/2\) and \(T/\delta\) is sufficiently large. In particular, for nondecreasing deterministic radius functions, the radius at time \(T\) itself must grow at this rate. We also identify the failure in the proposed stopping-time argument, clarify that the weighted inequality remains valid at each fixed deterministic time, and give valid finite- and infinite-horizon corrections through explicit confidence allocation.

\paragraph{Relation to prior work.} \citet{APS11-LinUCB} established the standard anytime self-normalized inequality for the unweighted process with a fixed ridge regularizer, whereas \citet{RVC19weightedLB} proposed the widely used weighted extension whose discounted specialization motivates this note. Our contribution is a correction-and-consequences analysis: we distinguish fixed-time from time-uniform validity, give a counterexample and a matching-order lower bound for the discounted process, and trace how the claimed anytime form enters later analyses. \Cref{Sec:Impact} provides a more detailed discussion of the relevant literature, including direct downstream uses of the claimed inequality and later treatments based on fixed-time or finite-horizon control.

\paragraph{Organization.} The note is organized as follows. \Cref{Sec:background} introduces the classical self-normalized inequality, the claimed discounted extension, and valid finite- and infinite-horizon corrections. \Cref{Sec:Counterexample} constructs a scalar Gaussian counterexample to the time-uniform inequality claimed by \citet{RVC19weightedLB} and gives lower bounds for valid anytime boundaries. \Cref{Sec:SN-technique-revisit} revisits the proof of the classical self-normalized inequality and presents a valid extension to one predictable sequence of weights. \Cref{Sec:Russac} identifies the invalid step in the proposed time-uniform argument and explains what remains valid at a fixed deterministic time. Finally, \cref{Sec:Impact} discusses the scope of the issue and its consequences for downstream analyses in bandit and reinforcement-learning literature.

\section{Background and the Claimed Inequality}\label{Sec:background}
\paragraph{Stationary ridge estimation.}
We first recall how self-normalized quantities arise in the usual stationary linear model when performing sequential decision-making. At each time $t$, the learner chooses a feature vector $X_t\in\mathbb R^d$ and observes $Y_t=X_t^\top\theta^*+\eta_t$, where $\theta^*\in\mathbb R^d$ is an unknown but fixed parameter and $\eta_t$ is noise. For $\lambda>0$, the ridge estimator is the solution of
\[
    \widehat\theta_t\triangleq\argmin_{\theta\in\mathbb R^d}
    \left\{
        \sum_{s=1}^t\bigl(Y_s-X_s^\top\theta\bigr)^2
        +\lambda\|\theta\|_2^2
    \right\}.
\]
Its equivalent closed-form expression is
\[
    \widehat\theta_t\triangleq V_t^{-1}\sum_{s=1}^t X_sY_s,
    \quad \text{where} \quad
    V_t\triangleq\lambda I_d+\sum_{s=1}^tX_sX_s^\top.
\]
Substituting the model into the closed-form estimator yields the error decomposition
\[
    \widehat\theta_t-\theta^*=V_t^{-1}\bigl(S_t-\lambda\theta^*\bigr),
    \quad \text{where} \quad
    S_t\triangleq\sum_{s=1}^t \eta_sX_s,
\]
Measuring error in the data-dependent norm $\|u\|_{V_t}\triangleq\sqrt{u^\top V_tu}$, which reflects the information collected in each direction, gives
 \[
    \|\widehat\theta_t-\theta^*\|_{V_t}
    =
    \left\|\bigl(S_t-\lambda\theta^*\bigr)\right\|_{V_t^{-1}}
    \leq
    \|S_t\|_{V_t^{-1}}+\|\lambda\theta^*\|_{V_t^{-1}}
    \leq
    \|S_t\|_{V_t^{-1}}+\sqrt{\lambda}\|\theta^*\|_2.
\]
The second term is a constant, and the derivation is due to $V_t\succeq \lambda I_d$, and $S_t$ is the accumulated noise in the estimator, while $V_t$ records the amount and directions of information collected so far. Hence, constructing a confidence set for $\theta^*$ reduces to controlling the self-normalized noise term $\|S_t\|_{V_t^{-1}}$.

To state the sequential assumptions precisely, let $(\mathcal F_t)_{t\geq 0}$ be a filtration. Assume that $X_t$ is $\mathcal F_{t-1}$-measurable and satisfies $\|X_t\|_2\leq L$, while $\eta_t$ is $\mathcal F_t$-measurable and conditionally $R$-sub-Gaussian given $\mathcal F_{t-1}$.
Here $L,R>0$, and fix $\delta\in(0,1)$. Under these assumptions, the self-normalized inequality of \citet{APS11-LinUCB}, with ridge regularization $\lambda I_d$, gives the time-uniform statement
\[
\P\lr{\exists t\geq 0: \lrn{S_t}_{V_t^{-1}}>R\sqrt{2\log\frac{\det(V_t)^{1/2}}{\delta\,\lambda^{d/2}}}}
\leq \delta.
\]
Moreover,
\[
\frac{\det(V_t)^{1/2}}{\lambda^{d/2}}\le \lr{1+\frac{tL^2}{\lambda d}}^{d/2}.
\]
Hence,
\begin{equation}\label{Eq:AY-Plugin}
\P\lr{\exists t\geq 0: \lrn{S_t}_{V_t^{-1}}>R\sqrt{2\log\frac{1}{\delta}+d\log\lr{1+\frac{tL^2}{\lambda d}}}}
\leq \delta.
\end{equation}
The radius in \cref{Eq:AY-Plugin} grows approximately as
$R\sqrt{d\log t}$.

\paragraph{Discounted estimation.}
In a non-stationary problem, the parameter may instead change with time, so that $Y_s=X_s^\top\theta_s+\eta_s$. Typically the parameter is assumed to change slowly, so that $\theta_t$ is close to $\theta_{t-1}$. In this case, it is natural to give recent observations more weight than older ones. A simple way to implement this idea is through exponential discounting. For a fixed discount factor $0<\gamma<1$, the corresponding weighted ridge estimator and the weighted design matrix are
\[
    \widehat\theta_t^\gamma\triangleq (V_t^\gamma)^{-1}
    \sum_{s=1}^t\gamma^{t-s}X_sY_s,
    \qquad
    V_t^\gamma\triangleq \lambda I_d+\sum_{s=1}^t\gamma^{t-s}X_sX_s^\top,
    \qquad 0<\gamma<1.
    \]
The random noise contribution to this estimator and its self-normalizer are
\[
    S_t^\gamma\triangleq\sum_{s=1}^t\gamma^{t-s}\eta_sX_s,
    \qquad
    \widetilde V_t^\gamma\triangleq \lambda I_d+\sum_{s=1}^t\gamma^{2(t-s)}X_sX_s^\top.
\]
The weights in $\widetilde V_t^\gamma$ are squared because it normalizes the variance proxy of the weighted sum $S_t^\gamma$. Thus, it is distinct from the estimator's discounted covariance matrix $V_t^\gamma$, which uses the unsquared weights $\gamma^{t-s}$.
Corollary~3 of \citet{RVC19weightedLB}, derived from their Proposition~1, claims the time-uniform bound
\begin{equation}\label{Eq:Russac}
    \mathbb P\left(\exists t\geq 1:\|S_t^\gamma\|_{(\widetilde V_t^\gamma)^{-1}}> r_t^\gamma(\delta)\right)\leq\delta,
\end{equation}
where for any $\delta\in(0,1)$,
\begin{equation}\label{Eq:Radius_Russac}
    r_t^\gamma(\delta)\triangleq R\sqrt{2\log\frac1\delta+d\log\left(1+\frac{L^2(1-\gamma^{2t})}{\lambda d(1-\gamma^2)}\right)}.
\end{equation}
Note that when $t\to\infty$, $(1-\gamma^{2t})\to 1$, meaning that the radius is bounded.

\paragraph{Why the claimed bound cannot hold.}
Discounting gives the process a finite effective memory. The process does not become progressively more stable: old noise disappears, but new noise continues to arrive. Hence, a bounded threshold cannot control the discounted process forever. In fact, in \cref{Sec:Counterexample}, we show a counterexample to the Russac bound that supports this intuition.

\paragraph{Correction.} First of all, radius \cref{Eq:Radius_Russac} remains valid pointwise, i.e., for any $t\ge 1$,
\begin{equation}\label{Eq:fixed_t_correction}
    \forall t\ge 1,\qquad \mathbb P\left(\|S_t^\gamma\|_{(\widetilde V_t^\gamma)^{-1}}> r_t^\gamma(\delta)\right)\leq\delta.
\end{equation}
A correct time-uniform statement can be obtained by, e.g., letting
\[
    \pi_t=\frac{1}{2t(1+\log t)^2},
    \qquad\delta_t=\delta\pi_t.
\]
Indeed, $\pi_t$ is decreasing on \([1,\infty)\), so
\[
\sum_{t=1}^{\infty}\pi_t
\leq
\pi_1+\int_1^\infty\frac{\dd x}{2x(1+\log x)^2}.
\]
Here $\pi_1 = \frac{1}{2}$ and the integral equals \(1/2\): with the substitution \(u=1+\log x\), it becomes
\[
\int_1^\infty\frac{\dd x}{2x(1+\log x)^2}
=
\frac12\int_1^\infty u^{-2}\dd u
=
\frac12.
\]
Therefore, \(\sum_{t=1}^{\infty}\pi_t\leq\frac12+\frac12=1\).
Applying the pointwise result with confidence level $\delta_t$
and taking a union bound gives
\begin{equation}\label{Eq:Correct-time-uniform}
    \mathbb P\left(\exists t\geq 1:\|S_t^\gamma\|_{(\widetilde V_t^\gamma)^{-1}}> r_t^\gamma(\delta_t)\right)\leq\delta.
\end{equation}
For fixed $\gamma$ and fixed remaining parameters, the radius
\[
r_t^\gamma(\delta_t)
=
R\sqrt{2\log t+4\log(1+\log t)+O(1)}
=
R\sqrt{2\log t+O(\log\log t)}.
\]
Thus, it has order $R\sqrt{\log t}$ just as in \cref{Eq:AY-Plugin}. 

\section{A Counterexample to \cref{Eq:Russac} and Lower Bounds for Anytime Boundaries}
\label{Sec:Counterexample}

This section has two parts. We first give a scalar Gaussian example in which the proposed bounded radius in \cref{Eq:Russac} is crossed with probability one. We then establish lower bounds for anytime boundaries. The main result lower-bounds the running envelope without any monotonicity assumption; for nondecreasing boundaries, it gives a direct lower bound on the boundary value at each time. This rules out any bounded anytime radius under the same assumptions.

\subsection{A scalar Gaussian counterexample}

We now construct a one-dimensional Gaussian example in which the proposed boundary is crossed with probability one. Take \(d=1\), \(L=1\), \(\theta^*=0\), \(X_t=1\) for all $t\ge 1$, and \(\lambda>0\). Let \((\eta_t)_{t\geq1}\) be independent random variables satisfying \(\eta_t\sim\mathcal N(0,R^2)\). Then \(Y_t=\eta_t\), so this is a stationary linear model. The deterministic choice \(X_t=1\) is valid because it is \(\mathcal F_{t-1}\)-measurable and satisfies \(|X_t|\leq L=1\); in the bandit scenario, it effectively takes the action set to be only \(\{1\}\). With respect to the natural filtration, this example satisfies the assumptions of \cref{Sec:background}. Setting \(S_0^\gamma=0\), we have
\begin{equation}\label{Eq:counterexample-recursion} S_t^\gamma=\gamma S_{t-1}^\gamma+\eta_t,\qquad \widetilde V_t^\gamma=\lambda+\frac{1-\gamma^{2t}}{1-\gamma^2}. \end{equation}

The first recursion in \cref{Eq:counterexample-recursion} is a stable first-order autoregression, usually called an AR(1) process: at each time, the process retains a fraction \(\gamma\) of its previous value and receives a fresh independent Gaussian noise. Iterating the recursion gives \(S_t^\gamma=\sum_{s=1}^t\gamma^{t-s}\eta_s\), and hence
\[
\operatorname{Var}(S_t^\gamma)=R^2\frac{1-\gamma^{2t}}{1-\gamma^2}=R^2\bigl(\widetilde V_t^\gamma-\lambda\bigr)\longrightarrow\frac{R^2}{1-\gamma^2}.
\]
Thus, discounting keeps the variance uniformly bounded over time, but does not make the random fluctuations vanish. The same identity explains the term self-normalizer: apart from the factor \(R^2\) and the regularization term \(\lambda\), \(\widetilde V_t^\gamma\) is the variance scale of the random quantity \(S_t^\gamma\) that it normalizes. This calculation does not by itself prove an almost-sure boundary crossing, but it explains why a bounded time-uniform boundary appears suspicious. In fact, we will see in the next proposition that the process crosses the proposed bounded boundary with probability one. The proof is deferred to \cref{App:proof-counterexample}.

\begin{proposition}
For every \(\delta\in(0,1)\), the process in \cref{Eq:counterexample-recursion} satisfies
\[
\mathbb P\left(\exists t\geq1:\left\lVert S_t^\gamma\right\rVert_{(\widetilde V_t^\gamma)^{-1}}>r_t^\gamma(\delta)\right)=1,
\]
where \(r_t^\gamma(\delta)\) is the radius defined in \cref{Eq:Radius_Russac}.
\label{Prop:counterexample}
\end{proposition}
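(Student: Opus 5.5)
The plan is to reduce the event to a statement about infinitely many nearly independent Gaussian tail events and then apply a Borel--Cantelli-type argument. In the scalar example, $\|S_t^\gamma\|_{(\widetilde V_t^\gamma)^{-1}}=|S_t^\gamma|/\sqrt{\widetilde V_t^\gamma}$. Both $\widetilde V_t^\gamma$ and $r_t^\gamma(\delta)$ are increasing in $t$ and bounded, with limits $\widetilde V_\infty=\lambda+1/(1-\gamma^2)$ and $r_\infty=R\sqrt{2\log(1/\delta)+\log(1+1/(\lambda(1-\gamma^2)))}$. It therefore suffices to show that almost surely $|S_t^\gamma|>c\triangleq r_\infty\sqrt{\widetilde V_\infty}$ for some $t\ge1$. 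Indeed, at such a $t$ we get
\[
|S_t^\gamma|/\sqrt{\widetilde V_t^\gamma}\ge|S_t^\gamma|/\sqrt{\widetilde V_\infty}>r_\infty\ge r_t^\gamma(\delta).
\]
In fact I will show that $|S_t^\gamma|>c$ happens infinitely often almost surely.

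To obtain independence, I use a block construction that exploits the AR(1) recursion $S_t^\gamma=\gamma S_{t-1}^\gamma+\eta_t$. Fix a block length $m\ge1$ and look at the times $t_k=km$. Write $S_{t_k}^\gamma=\gamma^m S_{t_{k-1}}^\gamma+Z_k$ with $Z_k\triangleq\sum_{s=t_{k-1}+1}^{t_k}\gamma^{t_k-s}\eta_s$. The $Z_k$ are i.i.d.\ $\mathcal N(0,\sigma_m^2)$ with $\sigma_m^2=R^2(1-\gamma^{2m})/(1-\gamma^2)>0$, since they depend on disjoint blocks of noise. A clean route is to use the events $A_k=\{|Z_k|>2c\}$, which are independent and each have the same positive probability $p=\P(|\mathcal N(0,\sigma_m^2)|>2c)>0$. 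Taking $m=1$ is already enough for this. By the second Borel--Cantelli lemma, or simply because $\P(\text{no }A_k\text{ for }k\le K)=(1-p)^K\to0$, some $A_k$ occurs almost surely.

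The remaining issue is that $Z_k$ large does not immediately make $S_{t_k}^\gamma$ large, because of the carry-over term $\gamma^mS_{t_{k-1}}^\gamma$. I handle this with a sign case split at the first occurrence of $A_k$. If $|S_{t_k}^\gamma|>c$, we are done. Otherwise $|\gamma^mS_{t_{k-1}}^\gamma|=|S_{t_k}^\gamma-Z_k|>2c-c=c$, so $|S_{t_{k-1}}^\gamma|>c/\gamma^m>c$. Either way some time $t\ge1$ has $|S_t^\gamma|>c$; when $k=1$ we have $S_0^\gamma=0$, so the second case cannot occur. This yields the crossing almost surely and hence probability one. For infinitely many crossings, the same argument applies with $A_k$ occurring infinitely often.

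The main obstacle is precisely this dependence between successive values of $S^\gamma$. The triangle-inequality trick above resolves it without any mixing estimates. The only care needed is in checking that the limits $\widetilde V_\infty$ and $r_\infty$ are finite and that the monotone comparison with $r_t^\gamma(\delta)$ goes in the right direction, which follows directly from \cref{Eq:Radius_Russac} and \cref{Eq:counterexample-recursion}.
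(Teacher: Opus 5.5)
Your proof is correct and is essentially the paper's argument. The paper likewise replaces the time-varying boundary by the constant $C=\sup_{t\ge1} r_t^\gamma(\delta)\sqrt{\widetilde V_t^\gamma}$, then uses $\lvert\eta_t\rvert\le\lvert S_t^\gamma\rvert+\gamma\lvert S_{t-1}^\gamma\rvert$ to show that staying in $[-C,C]$ up to time $m$ forces $\lvert\eta_t\rvert\le(1+\gamma)C$ for all $t\le m$, an event of probability $(1-q_C)^m\to0$. This is exactly the contrapositive of your case split, with your cruder threshold $2c$ in place of $(1+\gamma)C$ and the general block length $m$ unnecessary, as you note yourself.
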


\subsection{Lower bounds for anytime boundaries}

The previous proposition shows that a bounded time-uniform radius cannot be valid. We first prove a finite-horizon lower bound for one common level controlling the first \(T\) times. This formulation makes no monotonicity assumption and therefore lower-bounds the running envelope of every deterministic anytime boundary. For every deterministic nondecreasing anytime boundary, the running envelope equals \(B_T(\delta)\). In particular, for fixed \(\gamma\) and \(\lambda\), there are positive constants \(c_{\gamma,\lambda}\) and \(x_0\) such that, if \(\delta\leq1/2\) and \(T/\delta\geq x_0\), every such boundary must satisfy
\[
B_T(\delta)\geq c_{\gamma,\lambda}R\sqrt{\log(T/\delta)}.
\]
The corrected radius in \cref{Eq:Correct-time-uniform} is one boundary in this class and achieves this order. We now explain the finite-horizon argument. Suppose that we have an anytime boundary \(B_t(\delta)\) such that
\[
\mathbb P\left(
\frac{\lvert S_t^\gamma\rvert}{\sqrt{\widetilde V_t^\gamma}}
\leq B_t(\delta)
\text{ for every }t\geq1
\right)\geq1-\delta.
\]

Fix a horizon \(T\). Whenever the anytime guarantee holds, it also holds at each of the first \(T\) times. Let
\[
C_T(\delta)\triangleq\max_{1\leq t\leq T}B_t(\delta).
\]
We make no monotonicity assumption on \(t\mapsto B_t(\delta)\). Thus, \(C_T(\delta)\) is simply the largest boundary value among the first \(T\) times, and it need not equal \(B_T(\delta)\). Since every \(B_t(\delta)\) is at most \(C_T(\delta)\), the normalized process also stays below the single level \(C_T(\delta)\) throughout the first \(T\) times. Therefore,
\[
\mathbb P\left(
\max_{1\leq t\leq T}
\frac{\lvert S_t^\gamma\rvert}{\sqrt{\widetilde V_t^\gamma}}
\leq C_T(\delta)
\right)\geq1-\delta.
\]

This finite-horizon statement is weaker than the original anytime guarantee, because it replaces the separate thresholds \(B_1(\delta),\ldots,B_T(\delta)\) by their largest value. However, every valid anytime boundary gives such a common level \(C_T(\delta)\). The next proposition gives a lower bound on \(C_T(\delta)\).
\begin{proposition}[Finite-horizon lower bound]
\label{Prop:finite-horizon-lower-bound}
Consider the same scalar Gaussian process as in \cref{Eq:counterexample-recursion}. Fix \(T\geq1\) and \(\delta\in(0,1)\). Let \(C_T(\delta)\) be any deterministic boundary satisfying
\[
\mathbb P\left(
\max_{1\leq t\leq T}
\frac{\lvert S_t^\gamma\rvert}{\sqrt{\widetilde V_t^\gamma}}
\leq C_T(\delta)
\right)\geq1-\delta .
\]
Then
\[
C_T(\delta)
\geq
\frac{R}{(1+\gamma)\sqrt{\lambda+(1-\gamma^2)^{-1}}}
\,
\Phi^{-1}\left(\frac{1+(1-\delta)^{1/T}}{2}\right),
\]
where \(\Phi\) is the standard normal distribution function. In particular, for fixed \(\gamma\) and \(\lambda\), there are positive constants \(c_{\gamma,\lambda}\) and \(x_0\) such that, for \(\delta\in(0,1/2]\) and \(T/\delta\geq x_0\),
\[
C_T(\delta)\geq c_{\gamma,\lambda}R\sqrt{\log\frac{T}{\delta}}.
\]
\end{proposition}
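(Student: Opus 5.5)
The constant \((1+\gamma)\) in the bound points to recovering the individual noise terms from the normalized process through the AR(1) recursion \cref{Eq:counterexample-recursion}. The plan is to show that the event controlled by \(C_T(\delta)\) forces all of \(\eta_1,\dots,\eta_T\) into a bounded window, and then to use the independence of the \(\eta_t\) to compute that probability exactly.

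\emph{Step 1 (reduction to the noise).} Write \(A\triangleq\sqrt{\lambda+(1-\gamma^2)^{-1}}\). Since \(\widetilde V_t^\gamma=\lambda+\frac{1-\gamma^{2t}}{1-\gamma^2}\leq A^2\) for every \(t\), the event
\[
E\triangleq\Bigl\{\max_{1\le t\le T}\lvert S_t^\gamma\rvert/\sqrt{\widetilde V_t^\gamma}\le C_T(\delta)\Bigr\}
\]
implies \(\lvert S_t^\gamma\rvert\le A\,C_T(\delta)\) for all \(1\le t\le T\). The recursion gives \(\eta_t=S_t^\gamma-\gamma S_{t-1}^\gamma\), with \(S_0^\gamma=0\). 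Hence on \(E\) we have
\[
\lvert\eta_t\rvert\le(1+\gamma)\,A\,C_T(\delta)\triangleq a\qquad\text{for all } t\le T.
\]
So \(E\subseteq\{\max_{t\le T}\lvert\eta_t\rvert\le a\}\).

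\emph{Step 2 (exact probability).} The \(\eta_t\) are i.i.d.\ \(\mathcal N(0,R^2)\), so
\[
\P\bigl(\max_{t\le T}\lvert\eta_t\rvert\le a\bigr)=\bigl(2\Phi(a/R)-1\bigr)^T.
\]
This probability is at least \(\P(E)\ge 1-\delta\). Solving gives \(a/R\ge\Phi^{-1}\bigl((1+(1-\delta)^{1/T})/2\bigr)\), and substituting the definition of \(a\) yields the first claimed inequality.

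\emph{Step 3 (asymptotic form).} This is the only step needing care. Set \(p\triangleq\bigl(1-(1-\delta)^{1/T}\bigr)/2\), so the quantile is \(x=\Phi^{-1}(1-p)\). Using \(1-e^{-u}\le u\) and convexity of \(-\log(1-\delta)\) on \([0,1/2]\), we get \(-\log(1-\delta)\le 2\log 2\,\delta\). Therefore
\[
p\le\frac{-\log(1-\delta)}{2T}\le(\log 2)\,\frac{\delta}{T}.
\]
Next I will use a standard Gaussian lower tail bound. For \(x\ge1\), Mills' ratio gives
\[
1-\Phi(x)\ge\frac{x}{1+x^2}\,\varphi(x)\ge\frac{\varphi(x)}{2x}\ge c\,e^{-x^2}
\]
for an absolute constant \(c>0\), since \(e^{-x^2/2}/x\gtrsim e^{-x^2}\). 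Hence \(x\ge1\) and \(1-\Phi(x)=p\) imply \(x^2\ge\log(1/p)-\log(1/c)\ge\log(T/\delta)-c'\).

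Choosing \(x_0\) large makes \(p\) small enough that \(x\ge1\) automatically holds. It also makes \(c'\le\frac12\log(T/\delta)\), so \(x\ge\sqrt{\tfrac12\log(T/\delta)}\). The result then follows with
\[
c_{\gamma,\lambda}=\frac{1}{\sqrt2\,(1+\gamma)A}.
\]
The main obstacle is this last bookkeeping: getting a clean quantile lower bound with explicit constants uniformly over \(\delta\le1/2\). Steps 1 and 2 are mechanical.
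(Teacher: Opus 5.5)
Your proposal is correct and follows the paper's proof essentially step for step: bound $\widetilde V_t^\gamma$ by $\lambda+(1-\gamma^2)^{-1}$, invert the AR(1) recursion to confine every $\eta_t$ to $[-a,a]$, use independence to get $(2\Phi(a/R)-1)^T\ge 1-\delta$, and then convert the resulting Gaussian quantile into a $\sqrt{\log(T/\delta)}$ rate. The differences are cosmetic. You use the sharper chord bound $-\log(1-\delta)\le 2\log 2\,\delta$ in place of the paper's $2\delta$. You also make the Gaussian tail lower bound explicit via Mills' ratio with concrete constants, whereas the paper cites a standard bound with unspecified universal constants.
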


Thus, for an arbitrary deterministic anytime boundary, the proposition lower-bounds its running envelope \(C_T(\delta)=\max_{1\leq t\leq T}B_t(\delta)\), rather than any particular value \(B_T(\delta)\).

The proof is deferred to \cref{App:proof-finite-horizon-lower-bound}. Its main idea is simple. Since \(\widetilde V_t^\gamma\leq V_{\max}\), the normalized bound in the proposition implies that \(\lvert S_t^\gamma\rvert\) must remain below the common level \(C_T(\delta)\sqrt{V_{\max}}\) for all \(1\leq t\leq T\). We can therefore ask how large this deterministic interval must be in order to contain the process throughout the first \(T\) times with probability at least \(1-\delta\). The recursion then implies that each of the \(T\) independent noises must lie in a corresponding interval. By independence, the probability that one Gaussian noise lies outside this interval must be at most of order \(\delta/T\). This is possible only when \(C_T(\delta)\) has order at least \(R\sqrt{\log(T/\delta)}\).

The finite-horizon formulation is a more general result: it gives the running-envelope lower bound without imposing monotonicity. The following corollary gives the more familiar pointwise consequence for nondecreasing deterministic anytime boundaries. The corrected boundary \(r_t^\gamma(\delta_t)\) in \cref{Eq:Correct-time-uniform} is nondecreasing: \(\pi_t\) decreases with \(t\), so \(\log(1/\delta_t)\) increases, and \(1-\gamma^{2t}\) also increases. Hence, both terms inside the defining radius are nondecreasing, and the corollary applies to this correction.

\begin{corollary}[Nondecreasing anytime boundaries]
Under the same assumptions, suppose \(B_t(\delta)\) is a nondecreasing deterministic anytime boundary satisfying
\[
\mathbb P\left(
\frac{\lvert S_t^\gamma\rvert}{\sqrt{\widetilde V_t^\gamma}}
\leq B_t(\delta)
\text{ for every }t\geq1
\right)\geq1-\delta,
\]
Then there are positive constants \(c_{\gamma,\lambda}\) and \(x_0\) such that, if \(\delta\in(0,1/2]\) and \(t/\delta\geq x_0\), then
\[
B_t(\delta)\geq c_{\gamma,\lambda}R\sqrt{\log\frac{t}{\delta}}.
\]
\end{corollary}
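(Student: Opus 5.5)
The corollary follows from \cref{Prop:finite-horizon-lower-bound} together with monotonicity. Fix a nondecreasing deterministic anytime boundary \(B_t(\delta)\) satisfying the stated guarantee, and fix any time \(t\geq1\). We apply the finite-horizon reduction from the discussion preceding \cref{Prop:finite-horizon-lower-bound}, using horizon \(T=t\). On the event where the anytime guarantee holds, the normalized process satisfies \(\lvert S_s^\gamma\rvert/\sqrt{\widetilde V_s^\gamma}\leq B_s(\delta)\) for every \(s\leq t\). Hence it stays below \(C_t(\delta)=\max_{1\leq s\leq t}B_s(\delta)\) throughout \(\{1,\dots,t\}\), with probability at least \(1-\delta\). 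So \(C_t(\delta)\) is a deterministic level satisfying the hypothesis of \cref{Prop:finite-horizon-lower-bound} with \(T=t\).

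Next we use monotonicity. Since \(s\mapsto B_s(\delta)\) is nondecreasing, the maximum over \(s\leq t\) is attained at \(s=t\), so \(C_t(\delta)=B_t(\delta)\). Now take \(c_{\gamma,\lambda}\) and \(x_0\) to be the constants given by the ``in particular'' part of \cref{Prop:finite-horizon-lower-bound}. These constants depend only on \(\gamma\) and \(\lambda\), and in particular not on \(T\), \(\delta\), or the boundary. Therefore, whenever \(\delta\in(0,1/2]\) and \(t/\delta\geq x_0\),
\[
B_t(\delta)=C_t(\delta)\geq c_{\gamma,\lambda}R\sqrt{\log\frac{t}{\delta}}.
\]

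The only point needing care is that these constants are uniform. The proposition's asymptotic statement must supply one pair \((c_{\gamma,\lambda},x_0)\) that works simultaneously for every horizon \(T\) and every \(\delta\leq 1/2\) with \(T/\delta\geq x_0\). That is how the proposition is phrased, so the corollary's constants can be taken identical to the proposition's, and no further estimate of \(\Phi^{-1}\) is needed. Everything else is immediate. Because the boundary is deterministic, \(C_t(\delta)\) is a legitimate deterministic level. Because the event inclusion holds pathwise, the probability bound \(1-\delta\) transfers directly.
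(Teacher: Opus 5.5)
Your proposal is correct and follows the paper's proof: you set \(C_t(\delta)=\max_{1\leq s\leq t}B_s(\delta)\), use monotonicity to get \(C_t(\delta)=B_t(\delta)\), and apply \cref{Prop:finite-horizon-lower-bound} with \(T=t\). You also note that the proposition's constants \(c_{\gamma,\lambda}\) and \(x_0\) do not depend on \(T\), \(\delta\), or the boundary; its proof confirms this, since they come only from \(\gamma\), \(\lambda\), and universal Gaussian-tail constants, and the paper leaves this point implicit.
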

\begin{proof}
Set \(C_t(\delta)=\max_{1\leq s\leq t}B_s(\delta)\). By nondecreasingness, \(C_t(\delta)=B_t(\delta)\). The anytime guarantee implies the finite-horizon common-boundary condition for \(C_t(\delta)\), so \cref{Prop:finite-horizon-lower-bound} completes the proof.
\end{proof}

Together, these two propositions rule out the proposed bounded radius. The first proposition shows that the original radius is eventually crossed with probability one. The second gives a lower bound for every horizon \(T\) and, when \(\delta\leq1/2\) and \(T/\delta\) is sufficiently large, shows that the running envelope of any valid anytime boundary must be at least of order \(R\sqrt{\log(T/\delta)}\). Thus, the union-bound correction \cref{Eq:Correct-time-uniform} matches the lower-bound order in this scalar Gaussian example. Since this correction is nondecreasing, the corollary also shows that its \(\sqrt{\log T}\) growth is unavoidable among nondecreasing deterministic anytime boundaries. In particular, this scalar Gaussian process already rules out the claimed bounded anytime radius under the assumptions of \cref{Sec:background}. Therefore, no correct stopping-time argument can establish that bounded radius for the full class of processes covered by those assumptions.

\section{The Classical Self-Normalized Argument}\label{Sec:SN-technique-revisit}
In this section, we revisit the proof of the self-normalized concentration inequality of \citet{APS11-LinUCB} using Ville's inequality. The proof consists of three steps: constructing an exponential supermartingale for each fixed direction, mixing these supermartingales over a fixed Gaussian distribution, and applying Ville's inequality to the resulting mixture process.


\subsection{Ville's inequality}

We first recall Ville's inequality used in the proof.

\begin{lemma}[Ville's inequality]
\label{Lem:Ville}
Let $(M_t)_{t\geq0}$ be a nonnegative supermartingale with respect to $(\mathcal F_t)_{t\geq0}$. Then, for every $u>0$,
\[
    \P\lr{\exists t\geq0:M_t\geq u}\leq\frac{\E[M_0]}{u}.
\]
\end{lemma}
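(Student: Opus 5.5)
We will prove Ville's inequality by the standard stopping-time argument: pass from the time-uniform event to a single stopped value, apply Markov's inequality, and control the expectation using the optional stopping theorem for nonnegative supermartingales. First we reduce to a finite horizon. For fixed $N\geq0$, let $A_N\triangleq\{\exists t\in\{0,\dots,N\}:M_t\geq u\}$. The events $A_N$ increase to $\{\exists t\geq0:M_t\geq u\}$. By continuity from below, it therefore suffices to show $\P(A_N)\leq\E[M_0]/u$ for every $N$.

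Next we introduce the stopping time
\[
\tau\triangleq\min\{t\geq0:M_t\geq u\}\wedge N,
\]
which is a bounded stopping time with respect to $(\mathcal F_t)$. On $A_N$ we have $M_\tau\geq u$, since the first crossing occurs at or before $N$. Because $M$ is nonnegative, this gives $u\,\mathbb 1_{A_N}\leq M_\tau$. Taking expectations yields $u\,\P(A_N)\leq\E[M_\tau]$.

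The remaining step is to show $\E[M_\tau]\leq\E[M_0]$. We expect this to be the only step needing care, and it is mild. Because $\tau\leq N$, we write
\[
M_\tau=M_0+\sum_{t=1}^{N}\mathbb 1\{\tau\geq t\}(M_t-M_{t-1}).
\]
The event $\{\tau\geq t\}=\{\tau\leq t-1\}^c$ is $\mathcal F_{t-1}$-measurable. Hence, by the supermartingale property,
\[
\E\bigl[\mathbb 1\{\tau\geq t\}(M_t-M_{t-1})\bigr]=\E\bigl[\mathbb 1\{\tau\geq t\}\bigl(\E[M_t\mid\mathcal F_{t-1}]-M_{t-1}\bigr)\bigr]\leq0.
\]
All terms are integrable: $M_t\geq0$ and $\E[M_t]\leq\E[M_0]$, and we may assume $\E[M_0]<\infty$, since otherwise the claim is trivial. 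Summing over $t$ gives $\E[M_\tau]\leq\E[M_0]$, so $\P(A_N)\leq\E[M_0]/u$. Letting $N\to\infty$ completes the argument.
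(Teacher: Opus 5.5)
Your proof is correct and complete. You truncate at a finite horizon $N$, stop at the first crossing capped at $N$, apply Markov's inequality to $M_\tau$, and obtain $\mathbb E[M_\tau]\le\mathbb E[M_0]$ from the telescoping sum with the $\mathcal F_{t-1}$-measurable indicators $\mathbf 1\{\tau\ge t\}$. The limit $N\to\infty$ is then handled by continuity from below. This is the standard proof of Ville's inequality.

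The paper gives no proof of this lemma; it simply recalls it as a standard tool, so there is no alternative route to compare against. One connection worth seeing: your telescoping step is exactly where a single process $(M_t)$ compared at consecutive times is needed. That is precisely the property the paper shows is missing for the time-varying mixtures $\overline M_t$ in its discussion of where the time-uniform argument fails.
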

For any fixed deterministic time \(t\), Markov's inequality gives
\[
\P\lr{M_t\geq u}\leq\frac{\E[M_t]}{u}.
\]
If \((M_t)_{t\geq0}\) is a nonnegative supermartingale, Ville's inequality controls the probability that the process ever crosses the threshold \(u\). Its application requires one nonnegative supermartingale indexed by time, rather than a collection of random variables or mixture constructions that are valid only separately at each time.


\subsection{Exponential supermartingales in a fixed direction}

Recall that
\[
    S_t=\sum_{s=1}^t\eta_sX_s,\qquad
    V_t=\lambda I_d+\sum_{s=1}^tX_sX_s^\top,
\]
where $\lambda>0$ is fixed. For any fixed
$x\in\R^d$, define
\begin{equation}
\label{Eq:Directional-Supermartingale}
    M_t(x)=\exp\left(\frac{x^\top S_t}{R}-\frac{1}{2}x^\top(V_t-\lambda I_d)x\right).
\end{equation}
Since $X_t$ is $\mathcal F_{t-1}$-measurable and $\eta_t$ is
conditionally $R$-sub-Gaussian,
\begin{equation}\label{Eq:Property:Directional-Supermartingale}
    \E\left[M_t(x)\mid\mathcal F_{t-1}\right]
=M_{t-1}(x)\exp\left(-\frac{1}{2}(x^\top X_t)^2\right)\E\left[\exp\left(\frac{\eta_t x^\top X_t}{R}\right)\middle|\mathcal F_{t-1}\right]
\leq M_{t-1}(x).
\end{equation}
Therefore, $(M_t(x))_{t\geq0}$ is a nonnegative supermartingale for every fixed direction $x$.

\subsection{Gaussian mixture of supermartingales}\label{Subsec:recipe:mixture}

To control all directions simultaneously, let $h$ be the density
of the fixed Gaussian distribution $\mathcal N(0,\lambda^{-1}I_d)$ and
define
\[
    M_t=\int_{\R^d}M_t(x)h(x)\, dx.
\]
Because $M_t(x)h(x)\geq0$, conditional Tonelli's theorem allows us to exchange conditional expectation and integration. Hence, the mixture \((M_t)_{t\geq0}\) is also a nonnegative supermartingale:
\[
\mathbb E[M_t\mid\mathcal F_{t-1}]=\int_{\R^d}\mathbb E[M_t(x)\mid\mathcal F_{t-1}]h(x)\,dx\leq\int_{\R^d}M_{t-1}(x)h(x)\,dx=M_{t-1},
\]
with $M_0=1$. Substituting the Gaussian density and $M_t(x)$ in \cref{Eq:Directional-Supermartingale} gives
\begin{equation}\label{Eq:Mixture-Closed-Form}
M_t=\frac{\lambda^{d/2}}{(2\pi)^{d/2}}\int_{\R^d}\exp\left(\frac{x^\top S_t}{R}-\frac{1}{2}x^\top V_tx\right)\ dx
=\left(\frac{\lambda^d}{\det(V_t)}\right)^{1/2}\exp\left(\frac{\lrn{S_t}_{V_t^{-1}}^2}{2R^2}\right).
\end{equation}
Therefore,
\[
\P\lrc{\exists t\ge 0:\lrn{S_t}_{V_t^{-1}}>R\sqrt{2\log\frac{\det(V_t)^{1/2}}{\delta\lambda^{d/2}}}}
=\P\lrc{\exists t\ge 0:M_t>\frac{1}{\delta}} \le \delta,
\]
where the inequality is due to applying Ville's inequality to the single mixture nonnegative supermartingale $(M_t)_{t\geq0}$ with $M_0=1$.


\subsection{A Valid Extension for Fixed Predictable Weights}\label{Subsec:valid_weighted_extension}

The preceding argument remains valid for one predictable sequence of weights and a fixed regularizer. This valid special case is also
contained in the weighted setup considered by
\citet{RVC19weightedLB}. In particular, let $(w_t)_{t\geq1}$ be a real-valued predictable process, so that $w_t$ is $\mathcal F_{t-1}$-measurable, and define
\[
    S_t^w=\sum_{s=1}^t w_s\eta_sX_s,
    \qquad
    V_t^w=\lambda I_d+\sum_{s=1}^t w_s^2X_sX_s^\top,
\]
where $\lambda>0$ is fixed. Since $w_tX_t$ remains $\mathcal F_{t-1}$-measurable, the previous result can be applied with $X_t$ replaced by $w_tX_t$. Therefore,
\begin{equation}
\label{Eq:Fixed-Predictable-Weights}
\P\lr{
    \exists t\geq0:
    \lrn{S_t^w}_{(V_t^w)^{-1}}>R\sqrt{2\log\frac{\det(V_t^w)^{1/2}}{\delta\lambda^{d/2}}}
}\leq\delta.
\end{equation}

Thus, predictable weighting by itself does not destroy the time-uniform argument. In \cref{Eq:Fixed-Predictable-Weights}, $(w_s)_{s\geq1}$ is one sequence indexed by observation time: once observation $s$ is assigned weight $w_s$, that weight does not change as the terminal time continues. Also, both the regularizer $\lambda I_d$ and the Gaussian mixing distribution $\mathcal N(0,\lambda^{-1}I_d)$ remain fixed over time. 

\section{Where the Time-Uniform Argument Fails}\label{Sec:Russac}

Recall the discounted quantities
\[
S_t^\gamma=\sum_{s=1}^t\gamma^{t-s}\eta_sX_s,\qquad \widetilde V_t^\gamma=\lambda I_d+\sum_{s=1}^t\gamma^{2(t-s)}X_sX_s^\top.
\]
The weights \(\gamma^{t-s}\) depend on the terminal time \(t\), so they do not form one predictable sequence to which \cref{Eq:Fixed-Predictable-Weights} can be applied simultaneously over time. The argument of \citet{RVC19weightedLB} instead uses an equivalent rescaling that fixes the observation-time weights but makes the regularizer depend on \(t\). Define
\begin{equation*}
\overline S_t^\gamma\triangleq\sum_{s=1}^t\gamma^{-s}\eta_sX_s,\qquad \overline V_t^\gamma\triangleq\gamma^{-2t}\lambda I_d+\sum_{s=1}^t\gamma^{-2s}X_sX_s^\top. \end{equation*}
Then \(S_t^\gamma=\gamma^t\overline S_t^\gamma\) and \(\widetilde V_t^\gamma=\gamma^{2t}\overline V_t^\gamma\). The observation-time weights \(\gamma^{-s}\) are now fixed, but the regularizer \(\gamma^{-2t}\lambda I_d\) varies with the terminal time.

For every fixed \(x\in\R^d\), define
\begin{equation}\label{Eq:Russac-Directional-Supermartingale} M_t(x)=\exp\left(\frac{x^\top\overline S_t^\gamma}{R}-\frac{1}{2}x^\top\left(\overline V_t^\gamma-\gamma^{-2t}\lambda I_d\right)x\right). \end{equation}
Since \(\overline V_t^\gamma-\gamma^{-2t}\lambda I_d=\sum_{s=1}^t\gamma^{-2s}X_sX_s^\top\), this is exactly the fixed-weight construction in \cref{Eq:Directional-Supermartingale} with \(X_s\) replaced by \(\gamma^{-s}X_s\). Therefore, \((M_t(x))_{t\geq0}\) is a nonnegative supermartingale with \(M_0(x)=1\) for every fixed direction \(x\).

\subsection{The mixtures do not form one single supermartingale}

For each \(t\geq0\), let \(h_t\) be the density of \(\mathcal N(0,\gamma^{2t}\lambda^{-1}I_d)\) and define
\[
\overline M_t\triangleq\int_{\R^d}M_t(x)h_t(x)\,dx.
\]
Substituting the definitions of $M_t(x)$ and $h_t(x)$, the integral
\begin{equation}\label{Eq:Russac-Mixture-Closed-Form}
\overline M_t=\left(\frac{\gamma^{-2td}\lambda^d}{\det(\overline V_t^\gamma)}\right)^{1/2}\exp\left(\frac{\|\overline S_t^\gamma\|_{(\overline V_t^\gamma)^{-1}}^2}{2R^2}\right)=\left(\frac{\lambda^d}{\det(\widetilde V_t^\gamma)}\right)^{1/2}\exp\left(\frac{\|S_t^\gamma\|_{(\widetilde V_t^\gamma)^{-1}}^2}{2R^2}\right).
\end{equation}
If \((\overline M_t)_{t\geq0}\) were one nonnegative supermartingale with \(\overline M_0=1\), Ville's inequality would provide the desired time-uniform result.

The problem is that the Gaussian density used in the mixture changes with \(t\). Conditional Tonelli's theorem and the directional supermartingale property \cref{Eq:Property:Directional-Supermartingale} only give
\[
\mathbb E[\overline M_t\mid\mathcal F_{t-1}]=\int_{\R^d}\mathbb E[M_t(x)\mid\mathcal F_{t-1}]h_t(x)\,dx\leq\int_{\R^d}M_{t-1}(x)h_t(x)\,dx,
\]
whereas
\[
\overline M_{t-1}=\int_{\R^d}M_{t-1}(x)h_{t-1}(x)\,dx.
\]
The two expressions integrate the same random function \(M_{t-1}(x)\) against different densities. Since \(h_t\neq h_{t-1}\), the first integral is not necessarily bounded by \(\overline M_{t-1}\). The directional supermartingale property compares \(M_t(x)\) and \(M_{t-1}(x)\) for the same fixed direction \(x\); it does not compare mixtures formed using different densities. Therefore, the construction does not establish
\[
\mathbb E[\overline M_t\mid\mathcal F_{t-1}]\leq\overline M_{t-1},
\]
and the fixed-time mixtures \((\overline M_t)_{t\geq0}\) do not, in general, form one nonnegative supermartingale to which Ville's inequality can be applied. Of course, this does not say the inequality is false and maybe there is a way to bypass it. However, the counterexample in \cref{Sec:Counterexample} shows that this is not just a missing proof: the conclusion indeed is false.

\subsection{The stopping-time repair is invalid}

Lemma~3 of \citet{RVC19weightedLB} attempts to bypass the missing supermartingale property using an auxiliary sequence \((Z_t)_{t\geq0}\), independent of the data-generating process, such that \(Z_t\) has density \(h_t\). Let \(\tau\) be a stopping time with respect to \((\mathcal F_t)_{t\geq0}\); for simplicity, one may first take \(\tau\) to be bounded. Independence of the auxiliary Gaussian sequence gives
\[
\mathbb E[\overline M_\tau]=\mathbb E[M_\tau(Z_\tau)].
\]
For every fixed \(x\in\R^d\), optional stopping applied to the nonnegative supermartingale \((M_t(x))_{t\geq0}\) gives
\[
\mathbb E[M_\tau(x)]\leq1.
\]
The proof then conditions on the entire Gaussian sequence \((Z_t)_{t\geq0}\) and attempts to apply this fixed-direction inequality to \(M_\tau(Z_\tau)\). The difficulty is that conditioning fixes the whole sequence of directions, but it does not produce one common direction. Indeed, conditionally on \((Z_t)_{t\geq0}=(z_t)_{t\geq0}\),
\[
M_\tau(Z_\tau)=\sum_{t\geq0}\mathbf 1\{\tau=t\}M_t(z_t),
\]
whereas, for one fixed \(x\),
\[
M_\tau(x)=\sum_{t\geq0}\mathbf 1\{\tau=t\}M_t(x).
\]
In the first expression, the direction is \(z_t\) on the branch \(\{\tau=t\}\); in the second, the same direction \(x\) is used on every possible branch. Conditioning makes each \(z_t\) deterministic, but it does not make the sequence \((z_t)_{t\geq0}\) constant. Equivalently, the time-indexed process \(M_t(z_t)\) is not covered by the fixed-direction supermartingale result. Hence,
\[
\forall x\in\R^d,\qquad \mathbb E[M_\tau(x)]\leq1
\]
does not imply
\[
\mathbb E[M_\tau(Z_\tau)]\leq1.
\]
The proof then chooses \(\tau\) as the first time at which the proposed self-normalized boundary is crossed (over a finite horizon \(T\), its bounded truncation \(\tau\wedge T\))and applies Markov's inequality using the claimed bound \(\mathbb E[\overline M_\tau]\leq1\). Since the expectation bound has not been established, the stopping-time argument does not prove the claimed time-uniform inequality.

\subsection{What remains valid at a fixed deterministic time}

Although the mixtures do not form one supermartingale over time, each mixture remains valid at a fixed deterministic time. Fix \(t\geq1\). Since \(h_t\) is then one fixed probability density, the same Tonelli argument as in \cref{Subsec:recipe:mixture}, together with \(\mathbb E[M_t(x)]\leq1\) for every fixed \(x\), gives
\[
\mathbb E[\overline M_t]=\int_{\R^d}\mathbb E[M_t(x)]h_t(x)\,dx\leq\int_{\R^d}h_t(x)\,dx=1.
\]
Combining this with \cref{Eq:Russac-Mixture-Closed-Form} and applying Markov's inequality gives, for every fixed deterministic \(t\),
\[
\mathbb P\left(\|S_t^\gamma\|_{(\widetilde V_t^\gamma)^{-1}}>R\sqrt{2\log\frac{\det(\widetilde V_t^\gamma)^{1/2}}{\delta\lambda^{d/2}}}\right)\leq\delta.
\]
The determinant bound in \cref{Sec:background} yields
\[
\mathbb P\left(\|S_t^\gamma\|_{(\widetilde V_t^\gamma)^{-1}}>r_t^\gamma(\delta)\right)\leq\delta.
\]
Thus, the valid conclusion is that the inequality holds separately for every fixed deterministic \(t\).

\section{Related Work and Downstream Consequences}\label{Sec:Impact}

The literature presents a mixed picture. Some analyses apply the claimed anytime inequality directly, whereas others retain a fixed-time statement or explicitly pay for simultaneous control over a finite horizon. The latter treatments suggest that the difficulty created by the time index was likely recognized in parts of the literature. However, in preparing this note, we did not find a prior written discussion that both isolates the invalid stopping-time step and establishes that the resulting anytime claim is false. We therefore record this distinction here, especially because the claim has continued to appear in later analyses; the examples below are not intended to be exhaustive.

\subsection{Scope of the counterexample and available repairs}\label{Subsec:scope:direct-implication}

The counterexample in \cref{Sec:Counterexample} disproves the time-uniform weighted self-normalized inequality in \cref{Eq:Russac}, whose bounded radius contains no cost for time-uniformity. Therefore, every downstream proof that uses \cref{Eq:Russac} as one event holding simultaneously for all times has a gap at that step. This does not by itself imply that the underlying algorithm or the polynomial order of its final regret bound is incorrect.

The pointwise inequality in \cref{Eq:fixed_t_correction} remains valid. For a known finite horizon \(T\), applying it at confidence level \(\delta/T\) and taking a union bound gives
\[
\mathbb P\left(\exists 1\leq t\leq T:\|S_t^\gamma\|_{(\widetilde V_t^\gamma)^{-1}}>r_t^\gamma(\delta/T)\right)\leq\delta.
\]
Relative to \(r_t^\gamma(\delta)\), this repair adds \(2\log T\) inside the expression under the square root. For an infinite horizon, the allocation in \cref{Eq:Correct-time-uniform} gives a radius of order \(\sqrt{\log t}\). 
If such replacement of the confidence radius is the only required modification, the correction changes only logarithmic factors and hence does not change the polynomial order hidden by \(\widetilde O\)-notation. Whether this is sufficient should be checked in each application, since the confidence radius may also affect the algorithm or its tuning.

\subsection{Direct downstream uses}

For each direct use below, we state the problem studied and the step that uses the invalid simultaneous inequality.

\paragraph{\citet{RCG20-NSGLB}.} This paper studies non-stationary generalized linear bandits with piecewise changes and develops sliding-window and discounted UCB algorithms. Its Corollary 5 restates the discounted time-uniform inequality of \citet{RVC19weightedLB}. The proof of Proposition 2 applies Corollary 5 to control the weighted noise term simultaneously for all \(t\); Proposition 2 then yields the confidence bound used in Corollary 3 and the regret analysis of the discounted algorithm. The paper explicitly explains that, unlike the sliding-window analysis, the discounted analysis contains \(\log(1/\delta)\) rather than \(\log(T/\delta)\) because Corollary 5 is an anytime deviation inequality. Thus, both the simultaneous noise-control event and the resulting removal of the time-union cost are not justified.

\paragraph{\citet{KT20}.} This paper develops randomized-exploration algorithms for non-stationary stochastic linear bandits. Its Lemma 4 restates Proposition 3 of \citet{RVC19weightedLB} as a confidence event \(E_{\mathrm{wls}}\) claimed to hold simultaneously over \(t\in[T]\). This event is then used in the regret analyses of D-RandLinUCB and D-LinTS.

\paragraph{\citet{TV20}.} This paper studies episodic reinforcement learning in non-stationary linear MDPs, where both the rewards and transition kernels are linear in known features and may evolve over episodes. Its Lemma 10 states the weighted self-normalized inequality with a deterministic time-varying regularizer and a simultaneous-in-\(t\) conclusion. The proof of Lemma 13 applies Lemma 10 to obtain uniform concentration for the estimated transition model; the resulting confidence control is then used in the proof of Theorem 1 and its dynamic-regret consequence in Corollary 1.

\paragraph{\citet{FRAC21-GLBDrift}.} This paper studies non-stationary generalized linear bandits. The proof of Lemma 1 directly applies Proposition 1 of \citet{RVC19weightedLB} to obtain a discounted noise bound holding for all \(t\geq1\). Lemma 1 is then used in Lemma 5 to control the prediction error, and this control enters the regret analysis in Theorem 1.

\paragraph{\citet{DZKT+22-WPB}.} This paper studies non-stationary Gaussian-process bandits and uses weighted Gaussian-process regression to discount old observations. Its Lemmas 10 and 11 claim a weighted self-normalized bound holding simultaneously for all \(t\), obtained by adapting the standard fixed-regularizer argument to the time-dependent regularizer \(\alpha_t I\) and then expressing the result in terms of weighted information gain. Lemma 9 shows that this framework recovers the weighted linear bandit of \citet{RVC19weightedLB} as a linear-feature special case. The resulting concentration bound is used in Theorems 2 and 3; Theorem 3 then enters the dynamic-regret analysis of Theorem 4.

\paragraph{\citet{WZZ23revisiting}.} This paper revisits weighted strategies for non-stationary linear, generalized-linear, and self-concordant bandits. Its Theorem 5 restates the claimed weighted self-normalized inequality for a positive weight sequence \((w_s)_{s\geq1}\) and time-varying scalar regularizers \((\mu_t)_{t\geq1}\). In the proof of Lemma 5, the authors set \(w_s=\gamma^{t-s-1}\) and \(\mu_t=\lambda\) and use Theorem 5 to obtain a bound holding for all \(t\). Since this choice of \(w_s\) depends on the terminal time, it does not correspond to one fixed weight sequence across time, so the simultaneous conclusion does not directly follow from Theorem 5. Lemma 5 is then used in Lemma 1 and Theorem 1 for linear bandits; the analogous generalized-linear-bandit chain is Lemma 7, Lemma 2, and Theorem 2. In the self-concordant-bandit analysis, Theorem 6 is stated for a fixed time, while Lemma 3 uses it to obtain a conclusion for all \(t\), which then enters Theorem 3.

\paragraph{\citet{WLZ26-CoRE}.} This paper studies how limited computational resources should be allocated among multiple learning tasks whose losses must reach prescribed targets by their deadlines. Because each task's loss curve changes as training progresses, the authors use discounted least squares to estimate its current parameters, giving less weight to older observations that are less representative of its present progress. Theorem 2 in the appendix restates Theorem 1 of \citet{RVC19weightedLB}. Its proof sets \(w_i=\gamma^{n-i}\) and \(\mu_n=\lambda\) and then asserts that the resulting confidence bound holds for all \(n\). Since this choice of \(w_i\) depends on the terminal time \(n\), the simultaneous conclusion does not directly follow from the stated result. The resulting radius \(\beta_n\) is then used to guide the resource-allocation decision.

\subsection{Fixed-time and finite-horizon control}

The following works use fixed-time concentration or explicit finite-horizon confidence allocation in their final analyses. They therefore avoid relying on the claimed anytime radius, but do not provide a new anytime inequality.

\paragraph{\citet{RFCG21-GLB}.} This paper studies self-concordant generalized linear bandits, including logistic and Poisson bandits, with forgetting implemented through a sliding window or exponential weights. Remark 2 notes that time-dependent regularization destroys the conditional supermartingale relation available with a fixed regularizer. Remark 3 further explains that the terminal-time dependence prevents the standard stopping-time argument from being applied. The paper therefore states its deviation result only at a fixed deterministic time and requires a union bound to control the entire trajectory.

\paragraph{\citet{WWAK25-WSB}.} This paper studies non-stationary linear contextual bandits through weighted sequential Bayesian inference, maintaining a posterior distribution over the time-varying reward parameter. Appendix B of an earlier preprint version (arXiv v3) contains the same stopping-time error. In the later UAI 2026 version, the proof of Lemma 5 instead applies a fixed-time inequality at confidence level \(\delta/T\) over the finite horizon and takes a union bound. The latest version therefore obtains simultaneous control by explicitly paying for the finite horizon rather than treating the fixed-time radius as an anytime radius.

\section*{Acknowledgments}
This work was supported by the Academia Sinica Postdoctoral Scholar Program, Grant No.~AS-PD-1151-M15-2. The author thanks Melih Kandemir, Pei-Yuan Wu, Po-An Wang, Julian Zimmert, Yanlin Chen, and Nicklas Werge for helpful feedback and discussions on earlier versions of this note. The author also thanks the Institute of Statistics and Data Science, National Tsing Hua University, for providing access to workspace and facilities.


\small

\bibliographystyle{abbrvnat}
\bibliography{Yi-Shan-2025.bib}

\normalsize



\appendix

\section{Proofs for \cref{Sec:Counterexample}}

\subsection{Proof of \cref{Prop:counterexample}}
\label{App:proof-counterexample}

\begin{proof}
In one dimension, \(\left\lVert S_t^\gamma\right\rVert_{(\widetilde V_t^\gamma)^{-1}}=\lvert S_t^\gamma\rvert/\sqrt{\widetilde V_t^\gamma}\). Thus, the proposed boundary is crossed at time \(t\) precisely when
\[
\lvert S_t^\gamma\rvert>r_t^\gamma(\delta)\sqrt{\widetilde V_t^\gamma}.
\]
We first bound these time-dependent boundaries $r_t^\gamma(\delta)\sqrt{\widetilde V_t^\gamma}$ by one finite constant \(C\), and then show that the process $(S_t)_{t\ge 1}$ cannot remain within \([-C,C]\) forever.

Since \(r_t^\gamma(\delta)\) is bounded as discussed in \cref{Sec:background}, and \(\widetilde V_t^\gamma\to\lambda+(1-\gamma^2)^{-1}\), the constant
\[
C\triangleq\sup_{t\geq1}r_t^\gamma(\delta)\sqrt{\widetilde V_t^\gamma}
\]
is finite. By definition, \(r_t^\gamma(\delta)\sqrt{\widetilde V_t^\gamma}\leq C\) for every \(t\geq1\). Consequently,
\[
\lvert S_t^\gamma\rvert>C\quad\Longrightarrow\quad\left\lVert S_t^\gamma\right\rVert_{(\widetilde V_t^\gamma)^{-1}}>r_t^\gamma(\delta).
\]
It therefore suffices to prove
\[
\mathbb P\left(\exists t\geq1:\lvert S_t^\gamma\rvert>C\right)=1.
\]

Let \(q_C\triangleq\mathbb P\bigl(\lvert\eta_1\rvert>(1+\gamma)C\bigr)>0\), where positivity is due to \(C<\infty\) and a nondegenerate Gaussian random variable has unbounded support. For any \(m\geq1\), suppose that \(\max_{1\leq t\leq m}\lvert S_t^\gamma\rvert\leq C\). Since \(S_0^\gamma=0\), the recursion implies, for every \(1\leq t\leq m\),
\[
\lvert\eta_t\rvert=\lvert S_t^\gamma-\gamma S_{t-1}^\gamma\rvert\leq\lvert S_t^\gamma\rvert+\gamma\lvert S_{t-1}^\gamma\rvert\leq(1+\gamma)C.
\]
Therefore,
\[
\left\{\max_{1\leq t\leq m}\lvert S_t^\gamma\rvert\leq C\right\}\subseteq\bigcap_{t=1}^m\left\{\lvert\eta_t\rvert\leq(1+\gamma)C\right\}.
\]
Using independence of the noises,
\[
\mathbb P\left(\max_{1\leq t\leq m}\lvert S_t^\gamma\rvert\leq C\right)\leq\mathbb P\left(\bigcap_{t=1}^m\left\{\lvert\eta_t\rvert\leq(1+\gamma)C\right\}\right)=(1-q_C)^m\longrightarrow0.
\]

The finite-horizon events on the left decrease as \(m\) increases, and their intersection is the event that the process remains in \([-C,C]\) forever. Hence,
\[
\mathbb P\left(\lvert S_t^\gamma\rvert\leq C\text{ for every }t\geq1\right)=\lim_{m\to\infty}\mathbb P\left(\max_{1\leq t\leq m}\lvert S_t^\gamma\rvert\leq C\right)=0.
\]
Taking complements gives \(\mathbb P\left(\exists t\geq1:\lvert S_t^\gamma\rvert>C\right)=1\).
\end{proof}

\subsection{Proof of \cref{Prop:finite-horizon-lower-bound}}
\label{App:proof-finite-horizon-lower-bound}

\begin{proof}
Let \(V_{\max}\triangleq \lambda+\frac{1}{1-\gamma^2}\). Whenever the maximum in the proposition's assumption is at most \(C_T(\delta)\), we have, for every \(1\leq t\leq T\),
\[
\lvert S_t^\gamma\rvert
\leq C_T(\delta)\sqrt{\widetilde V_t^\gamma}
\leq C_T(\delta)\sqrt{V_{\max}}.
\]
The last inequality uses \(\widetilde V_t^\gamma\leq V_{\max}\). Hence, the assumption of $C_T(\delta)$ in the proposition implies
\[
\mathbb P\left(
\max_{1\leq t\leq T}\lvert S_t^\gamma\rvert
\leq C_T(\delta)\sqrt{V_{\max}}\right)\geq1-\delta.
\]

The bound on the right-hand side is deterministic, while the recursion depends on \(T\) independent Gaussian noises. As in the proof of \cref{Prop:counterexample},
\[
\max_{1\leq t\leq T}\lvert S_t^\gamma\rvert\leq C_T(\delta)\sqrt{V_{\max}} \quad \implies \quad \lvert \eta_t\rvert\leq(1+\gamma)C_T(\delta)\sqrt{V_{\max}} \quad \text{for every } 1\leq t\leq T.
\]
Thus, by independence of the noises,
\[
1-\delta
\leq
\mathbb P\left(\max_{1\leq t\leq T}\lvert S_t^\gamma\rvert
\leq C_T(\delta)\sqrt{V_{\max}}\right)
\leq
\left[\mathbb P\left(\lvert\eta_1\rvert\leq(1+\gamma)C_T(\delta)\sqrt{V_{\max}}\right)\right]^T.
\]
Thus, to make this probability at least \(1-\delta\), the probability for each noise to remain in that interval must be at least \((1-\delta)^{1/T}\). This condition will force \(C_T(\delta)\) to be large. Since \(\eta_1/R\sim\mathcal N(0,1)\), this implies
\[
2\Phi\left(\frac{(1+\gamma)C_T(\delta)\sqrt{V_{\max}}}{R}\right)-1
\geq
(1-\delta)^{1/T},
\]
where we recall that for a standard normal random variable \(Z\), $\Phi(z)$ is the cumulative distribution function, and $2\Phi(z)-1$ is the probability that \(\lvert Z\rvert\leq z\). Solving for \(C_T(\delta)\) gives the displayed lower bound.

To obtain the order statement, we first let $a_T\triangleq\frac{(1+\gamma)C_T(\delta)\sqrt{V_{\max}}}{R}$. The previous formula can be rewritten as
\[
\Phi(a_T)
\geq
\frac{1+(1-\delta)^{1/T}}{2}.
\]
Since \(\mathbb P(Z>a_T)=1-\Phi(a_T)\), taking complements gives
\[
\mathbb P(Z>a_T)
\leq
\frac{1-(1-\delta)^{1/T}}{2}
\leq \frac{-\log(1-\delta)}{2T}
\leq \frac{\delta}{T},
\]
where we used \(1-e^{-x}\leq x\) and \(-\log(1-\delta)\leq2\delta\) for \(\delta\in(0,1/2]\). Thus, the threshold \(a_T\) must make the one-sided Gaussian tail no larger than \(\delta/T\). A standard Gaussian tail lower bound shows that there are universal constants \(c>0\) and \(u_0>0\) such that, whenever \(u\leq u_0\),
\[
\mathbb P(Z>a)\leq u
\qquad\Longrightarrow\qquad
a\geq c\sqrt{\log\frac{1}{u}}.
\]
When \(T/\delta\) is sufficiently large, we may apply this bound with \(u=\delta/T\). It gives
\[
a_T\geq c\sqrt{\log\frac{T}{\delta}}.
\]
Recalling the definition of \(a_T\) proves the final claim.
\end{proof}

\end{document}